\algnewcommand{\LineComment}[1]{\State \(\triangleright\) #1}
\newcommand{\tabfiguresmall}[2]{\raisebox{-0.3\height}{\includegraphics[#1]{#2}}}
\theoremstyle{definition}
\newtheorem{definition}{Definition}
\newtheorem{theorem}{Theorem}
\newtheorem{lemma}{Lemma}
\def\x{{\bf x}}
\def\y{{\bf y}}
\def\p{{\bf p}}
\def\R{{\bf R}}
\def\boldr{{\bf r}}
\def\bolds{{\bf s}}
\def\boldt{{\bf t}}
\def\boldo{{\bf o}}
\def\boldO{{\bf O}}
\def\boldg{{\bf g}}
\def\boldV{{\bf V}}
\def\boldE{{\bf E}}
\def\boldhatO{{\bf \hat{O}}}
\def\0{{\bf 0}}
\def\rmmax{{\rm max}}
\def\argmax{\mathop{\rm argmax}}
\def\argwhere{\mathop{\rm argwhere}}
\newcommand{\HIGHLIGHT}[1]{\textcolor{red}{#1}}
\begin{document}
%
\title{Exploring Model Learning Heterogeneity\\for Boosting Ensemble Robustness}
\author{\IEEEauthorblockN{Yanzhao Wu$^{*,\dagger}$, Ka-Ho Chow$^{\dagger}$, Wenqi Wei$^{\dagger,\ddagger}$, Ling Liu$^{\dagger}$}
$^*$ Florida International University, Miami, FL 33199\\
$^\dagger$ Georgia Institute of Technology, Atlanta, GA 30332\\
$^\ddagger$ Fordham University, New York City, NY 10023\\
}


%


\maketitle

\begin{abstract}
Deep neural network ensembles hold the potential of improving generalization performance for complex learning tasks. This paper presents formal analysis and empirical evaluation to show that heterogeneous deep ensembles with high ensemble diversity can effectively leverage model learning heterogeneity to boost ensemble robustness. We first show that heterogeneous DNN models trained for solving the same learning problem, e.g., object detection, can significantly strengthen the mean average precision (mAP) through our weighted bounding box ensemble consensus method. Second, we further compose ensembles of heterogeneous models for solving different learning problems, e.g., object detection and semantic segmentation, by introducing the connected component labeling (CCL) based alignment. We show that this two-tier heterogeneity driven ensemble construction method can compose an ensemble team that promotes high ensemble diversity and low negative correlation among member models of the ensemble, strengthening ensemble robustness against both negative examples and adversarial attacks. Third, we provide a formal analysis of the ensemble robustness in terms of negative correlation. Extensive experiments validate the enhanced robustness of heterogeneous ensembles in both benign and adversarial settings. The source codes are available on GitHub at \url{https://github.com/git-disl/HeteRobust}.
\end{abstract}


%
\IEEEpeerreviewmaketitle

\section{Introduction}
\label{sec:intro}
Deep neural networks (DNNs) have achieved remarkable success in tackling numerous real-world data mining challenges. However, a well-trained DNN not only fails when tested on some unseen examples~\cite{out-of-distribution-baseline,out-of-distribution-detection,ensemble-tdsc}, but it is also vulnerable to adversarial attacks~\cite{adv-examples-seg-det,transferable-adv-attack-od,robust-adv-perturb-proposal-models,od-adv-lens-esorics}.
Adversarial robustness of deep learning against disruptive events is considered a prerequisite for many mission critical applications, e.g., medical diagnosis~\cite{adv-attacks-medical-ml,adv-attacks-medical-image-classification} and autonomous driving~\cite{robust-physical-world-attacks,adv-sensor-attack-LiDAR}.
Recent research on fusion learning has shown improved test accuracy on unseen examples that are captured under poor lighting, poor weather, or sensor noise conditions~\cite{deep-adaptive-fusion,seeing-through-fog}. However, such solutions remain vulnerable under adversarial attacks~\cite{adv-attacks-medical-ml,adv-attacks-medical-image-classification,robust-physical-world-attacks,adv-sensor-attack-LiDAR}.

\begin{figure*}[t!]
\centering
    \includegraphics[width=1.0\textwidth]{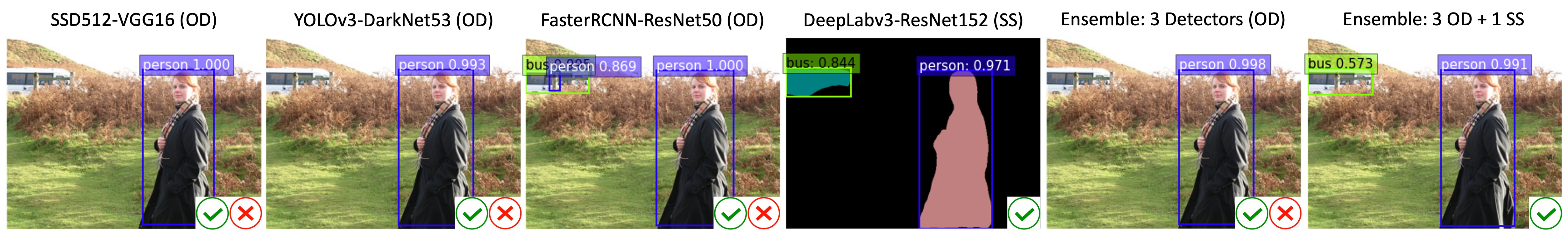}
    \caption{Object Detection (OD) and Semantic Segmentation (SS) Example on PASCAL VOC, the two-tier heterogeneous ensemble (3 OD + 1 SS) produces the correct predictions, outperforming the tier-1 heterogeneous ensemble (3 Detectors).}
    \label{fig:ensemble-seg-example-voc-intro}
\end{figure*}

\noindent {\bf Related Work and Problem Statement.}
Deep neural network ensembles hold the potential to provide strong adversarial robustness. Several orthogonal yet complementary studies in ensemble learning can be summarized into three broad categories. The first category has exploited the ensemble of multiple single task learners, represented by image classifiers, for improving the overall robustness against adversarial examples~\cite{ensemble-tdsc,ensemble-icnc,ensemble-bigdata,deepensembles,improving-adversarial-robustness}.
Some recent efforts have been devoted to developing the diversity metrics that can guide ensemble teaming such that a high quality ensemble team will be composed by the member classifiers with high diversity~\cite{ensemble-mass,EnsembleBenchCogMI,EnsembleBenchCVPR,EnsembleBenchICDM,error-diversity-object-detection-icdm}.
The second category has incorporated multiple tasks within a single model, such as different depth estimation learners in training a semantic segmentation model by using a shared encoder and performing multi-task learning at once~\cite{cross-stitch-net-multi-task,multi-task-nlu}. 
It is shown that multi-task learning can improve adversarial robustness of a single semantic segmentation model~\cite{multitask-robustness,robust-learning-cross-task,robustness-cross-domain-ensemble}.
The third category of efforts has explored multi-modal learning to train multi-modal networks with improved robustness by using data from multiple different sensors, such as cameras, LiDARs and Radars. The fusion of multiple different modalities shows the potential of improving prediction accuracy and robustness for vision tasks, e.g., object detection and semantic segmentation under extreme conditions~\cite{deep-adaptive-fusion,seeing-through-fog}.
However, there are still two critical challenges for achieving strong ensemble robustness: (1) how to properly measure and leverage ensemble diversity to compose robust ensemble teams of multi-task learners that are trained for solving the same problem, such as object detection; and (2) how to further strengthen ensemble robustness by performing ensemble consensus of multiple heterogeneous models that are trained for solving different learning problems, such as object detection and semantic segmentation.

\noindent {\bf Scope and Contributions.}
This paper addresses these problems of ensemble robustness with three original contributions.
{\it First}, we show that an ensemble of heterogeneous DNN models trained for solving the same learning problem, e.g., object detection, can strengthen the adversarial robustness compared to its best member model under adverse conditions, as long as the member models of the ensemble team are highly diverse and failure independent. One of the key challenges is to identify high quality ensembles that are composed of highly diverse member models with complementary prediction capacities. We leverage focal diversity metrics to precisely measure ensemble diversity and select high quality ensemble teams by their high focal diversity scores.
Another key challenge is to develop an effective detection consensus mechanism, which can harness the complementary wisdom of the diverse member models of an ensemble team to improve the overall mAP performance of the ensemble detector. We propose to integrate the bounding boxes using a weighted algorithm and combine the classification prediction by consensus voting.
{\it Second}, we explore the failure scenarios for the ensemble of multiple heterogeneous object detectors and develop additional optimization to further enhance the adversarial robustness of our heterogeneous model ensemble approach. Concretely, we introduce the second tier heterogeneous ensemble construction, which integrates heterogeneous models trained for solving different learning problems, say combining the tier-1 heterogeneous object detector ensemble with semantic segmentation models. We argue that this two-tier heterogeneity driven ensemble learning approach can provide high robustness guarantee and improve mAP under both negative examples and adversarial examples.
Figure~\ref{fig:ensemble-seg-example-voc-intro} shows a test example from PASCAL VOC. All three well trained object detectors fail partially: SSD512 and YOLOv3 fail to detect the \verb|bus| object, FasterRCNN mistakenly identifies a fabricated \verb|person| object next to the \verb|bus|, and the ensemble of these three heterogeneous detectors also fails to detect the \verb|bus| object. However, by adding DeepLabv3-ResNet152~\cite{deeplabv3}, a heterogeneous model trained for solving semantic segmentation problem instead of object detection, our two-tier heterogeneous ensemble can detect the \verb|bus| object with high confidence of 0.573. 
{\it Third}, to gain a theoretical understanding of the robustness of our two-tier heterogeneity driven ensemble construction method, we provide a formal analysis of the heterogeneous ensemble robustness in terms of negative correlation, followed by extensive experiments. Our empirical results also validate the enhanced robustness of two-tier heterogeneous deep ensemble approach under both benign and adversarial settings.

\begin{figure*}[t!]
\centering
    \includegraphics[width=1.0\textwidth]{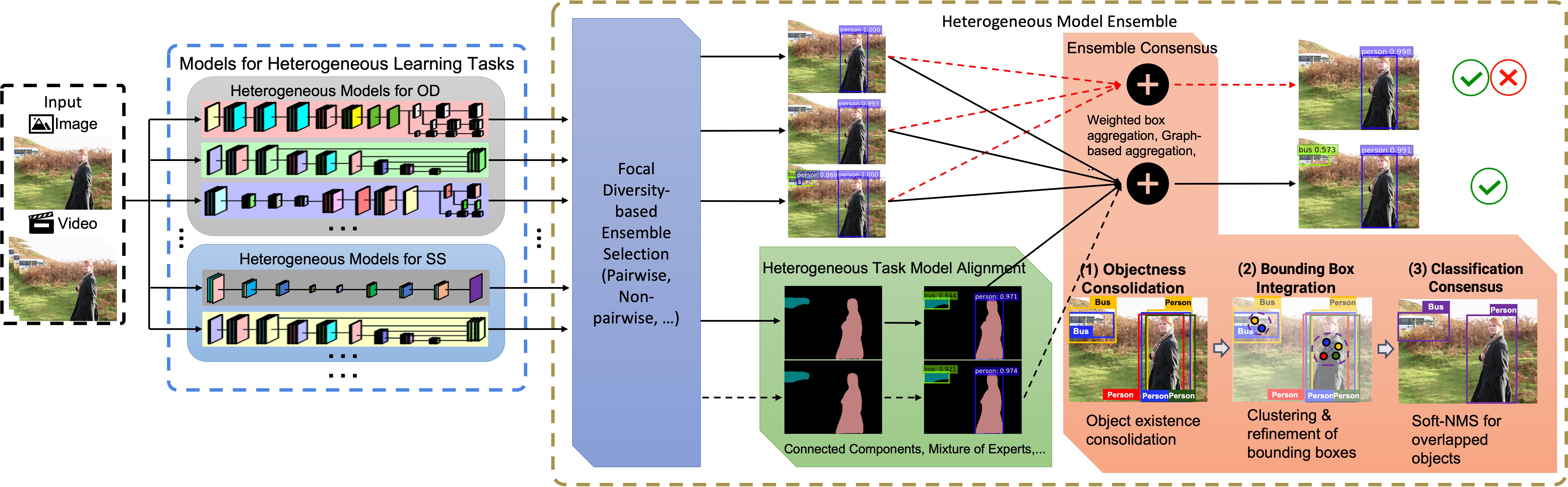}
    \caption{General Architecture for Heterogeneous Model Ensembles}
    \label{fig:het-model-ensemble-arch}
\end{figure*}

\section{Methodology}
The main idea of the proposed two-tier heterogeneous ensemble learning is to create each tier-1 ensemble from the heterogeneous models trained for solving the same learning problem, be it object detection or semantic segmentation, and then to create the next tier ensemble by combining the complementary wisdom from the heterogeneous ensemble teams trained for solving different types of learning problems, such as object detection and semantic segmentation. In this paper, we present the development of our general purpose architecture through a concrete prototype design and implementation, as shown in Figure~\ref{fig:het-model-ensemble-arch}. We use object detection models trained on a benchmark dataset (e.g., MS COCO~\cite{coco} or PASCAL VOC~\cite{voc}) to compose the tier-1 heterogeneous model ensemble, addressing the technical challenge of generating the ensemble prediction of all detected objects in terms of their bounding boxes and their classification. This requires a principled approach to multi-objective consensus building so that multiple detection prediction vectors from each member detectors can be effectively combined to improve the robustness of the tier-1 ensemble against both negative examples under benign scenarios and adversarial examples under malicious settings. In tier-2 heterogeneous ensemble teaming, we need to combine models trained for solving different learning problems. By combining the ensemble of heterogeneous models trained for object detection with one or more models trained for solving a different learning problem, such as semantic segmentation, we can further enhance the robustness of tier-1 heterogeneous ensembles. 

\begin{algorithm}[!h]
\caption{Ensemble Consensus for Object Detection}
\label{alg:od-ensemble}
\footnotesize
\begin{algorithmic}[1]
    \Procedure{DetectionEnsemble}{$\boldO_1, \boldO_2, ..., \boldO_N$, $\theta_{IoU}$}
    \State \textbf{Input}: $\boldO_1, \boldO_2, ..., \boldO_N$: the object detection results for an input image from $N$ different member models ($M_1, M_2,...,M_N$); $\theta_{IoU}$: the IoU threshold.
    \State \textbf{Output}: $\boldhatO$: the ensemble detection results.
    \State Initialize $\boldhatO = \{\}$.
    \For{each unique class label $c\in \{\boldO_1, \boldO_2, ..., \boldO_N\}$}
        \State Initialize $\boldV = \{\}$, $\boldE = \{\}$. \label{alg-od-ens:graph-start} \Comment{$\boldV$, $\boldE$ for the graph.}
        \LineComment{Obtain vertices for the graph.}
        \For{$i=1$ to $N$}
            \For{each object $\boldo \in \boldO_i$}
                \If{$\boldo_{[cls]} == c$}
                    \LineComment{Add each detected object as a vertex.}
                    \State $\boldV.add(i, \boldo)$
                \EndIf
            \EndFor
        \EndFor
        \LineComment{Build edges for the graph.}
        \For{each pair $\boldo, \boldo^{'} \in \binom{\boldV}{2}$}
            \State $w = IoU(\boldo_{[bbox]}, \boldo^{'}_{[bbox]})$
            \If{$\boldo != \boldo^{'}$ and $w \ge \theta_{IoU}$}
                \State $\boldE.add(\boldo, \boldo^{'}, w)$
            \EndIf
        \EndFor \label{alg-od-ens:graph-end}
        \LineComment{Clique Partitioning}
        \For{each $clique \in CliquePartition(\boldV, \boldE)$} \label{alg-od-ens:clique-partition}
            \LineComment{Reaching Consensus}
            \If{$clique.num\_models \ge N/2$}
                \State $\boldo^{\rmmax} = MaxConfidenceObject(clique)$ \label{alg-od-ens:find-max-conf}
                \LineComment{Weighted Bounding Box Aggregation}
                \State $\boldo^{\rmmax}_{[bbox]} = WeightedBoundingBox(clique)$ \label{alg-od-ens:weighted-bounding-box}
                \State $\boldhatO.append(\boldo^{\rmmax})$
                \LineComment{Address the problem of overlapped objects}
                \For{each object $\boldo \in clique - \boldo^{\rmmax}$} \label{alg-od-ens:soft-nms-start}
                    \State $\boldo_{[prob]}$=$\boldo_{[prob]}(1-IoU(\boldo^{\rmmax}_{[bbox]},\boldo_{[bbox]}))$
                    \State $\boldhatO.append(\boldo)$
                \EndFor \label{alg-od-ens:soft-nms-end}
            \EndIf
        \EndFor
    \EndFor
    \State \Return $\boldhatO$
    \EndProcedure
\end{algorithmic}
\end{algorithm}

\noindent {\bf Tier-1 Object Detection Ensemble: Model Teaming and Detection Integration.\/} We assume that there is a base model pool of diverse models trained for solving the same learning problem. We then leverage the focal diversity metrics (in the Appendix) to select a subset of highly diverse base models to form an ensemble team of high ensemble diversity. For example, we found that the models trained using heterogeneous DNN backbones, such as SSD512-VGG16, YOLOv3-DarkNet53 and FasterRCNN-ResNet50, are of high quality and yet with low failure dependency. For each tier-1 heterogeneous ensemble team, such as the ensemble of the three object detectors (SSD, YOLOv3, and FasterRCNN), we need to develop the integration consensus method that is customized to produce the ensemble detection output by effectively combining multiple prediction vectors for each detected object from each object detector. For example, a \verb|bus| object in Figure~\ref{fig:ensemble-seg-example-voc-intro}, the objectness vector, the bounding box vector, the classification vector produced by YOLOv3 (see the 2nd column in Figure~\ref{fig:ensemble-seg-example-voc-intro}) are quite different from those by FasterRCNN (see the 3rd column in Figure~\ref{fig:ensemble-seg-example-voc-intro}). How to build a robust consensus for object detection ensembles can be challenging.

In our prototype for object detection ensembles, we implement an effective three-step ensemble consensus method for combining multiple diverse object detectors.
Algorithm~\ref{alg:od-ensemble} provides a sketch of the pseudo code, which follows three steps.
{\it First}, we perform the objectness consolidation by examining those detected objects with their objectness confidence above a model-specific threshold and building a graph for each detected class label $c$. In the graph, each vertex represents a detected object of class label $c$ by a member object detector (e.g., $M_i$), and each weighted edge connects two detections with an IoU (Intersection over Union) score higher than the given IoU threshold $\theta_{IoU}$ ($=0.5$ by default), where this IoU score is used as the edge weight (Line~\ref{alg-od-ens:graph-start}$\sim$\ref{alg-od-ens:graph-end}).
{\it Second}, we perform the weighed bounding box ensemble by finding and aggregating the set of highly overlapped objects detected by different member models using the clique partitioning algorithm by maximizing the sum of the edge weights (Line~\ref{alg-od-ens:clique-partition}). For the resulting $clique$ with multiple vertices, it indicates that the corresponding objects from different member models have highly overlapped bounding boxes. We compute the final bounding box for each $clique$ using the weighted method for combining bounding boxes and integrating classification (Line~\ref{alg-od-ens:weighted-bounding-box}), where the confidence scores will serve as the default weights~\cite{softnms,robust-od-kdd,wbf}.
{\it Third}, we further address the problem of multiple overlapping objects of the same class label. For each clique reaching consensus, we examine every vertex in a clique and identify the vertex with the highest confidence, denoted as $\boldo^{\rmmax}$ (Line~\ref{alg-od-ens:find-max-conf}). Then for each remaining vertex $\boldo$ of the same class as $\boldo^{\rmmax}$, we recompute its classification confidence based on its bounding box overlapping (IoU) with that of $\boldo^{\rmmax}$, using the formula: $\boldo_{[prob]}=\boldo_{[prob]}(1-IoU(\boldo^{\rmmax}_{[bbox]},\boldo_{[bbox]}))$ (Line~\ref{alg-od-ens:soft-nms-start}$\sim$\ref{alg-od-ens:soft-nms-end}). By adding these objects, we can obtain the final ensemble detection results $\boldhatO$. Figure~\ref{fig:het-model-ensemble-arch} provides an architectural overview of our tier-1 heterogeneous ensemble consensus method by weighted algorithms for combining bounding boxes and integrating classification confidence for each detected object.

\begin{algorithm}[!h]
\caption{CCL-based Alignment}
\label{alg:ccl-alignment}
\footnotesize
\begin{algorithmic}[1]
    \Procedure{CCL-ALI}{$\bolds$, $minsize$}
    \State \textbf{Input}: $\bolds$: the output confidence scores of a semantic segmentation model; $minsize$: the minimum size of a connected component.
    \State \textbf{Output}: $\boldO^{*}$: all output objects including their class labels, confidence scores and bounding boxes.
    \State Initialize $\boldO^{*} = \{\}$.
    \LineComment{Extract the class label for each pixel; the shape of \bolds: $(C, H, W)$}
    \State $\p = \argmax(\bolds, axis=0)$
    \For{each unique class label $c\in \p$}
        \State $\boldt = \p.copy()$
        \State $\boldt[where(\p!=c)] = 0$ \label{alg-ccl-align:eliminate-other-class-labels}
        \Comment{Eliminate other class labels}
        \State ${\bf cc} = ConnectedComponentLabeling(\boldt)$ \label{alg-ccl-align:connected-component-labeling}
        \For{each connected component label $cl\in {\bf cc}$}
            \LineComment{Obtain the indexes of this component}
            \State ${\bf cc\_indices} = \argwhere({\bf cc} == cl)$
            \LineComment{Only consider the components above a threshold}
            \If{$size({\bf cc\_indices}) >= minsize$}
                \State $\boldo^{*}_{[cls]} = c$
                \LineComment{Obtain the bounding box of this component}
                \State $\boldo^{*}_{[bbox]} = BBox({\bf cc\_indices})$ \label{alg-ccl-align:bbox}
                \LineComment{Average the scores of this component}
                \State $\boldo^{*}_{[prob]} = Average(\bolds[c][{\bf cc\_indices}])$ \label{alg-ccl-align:average-confidence-scores}
                \State $\boldO^{*}.append((\boldo^{*}_{[cls]}, \boldo^{*}_{[prob]}, \boldo^{*}_{[bbox]}))$
            \EndIf
        \EndFor
    \EndFor
    \State \Return $\boldO^{*}$
    \EndProcedure
\end{algorithmic}
\end{algorithm}

\begin{figure*}[t!]
\centering
    \includegraphics[width=1.0\textwidth]{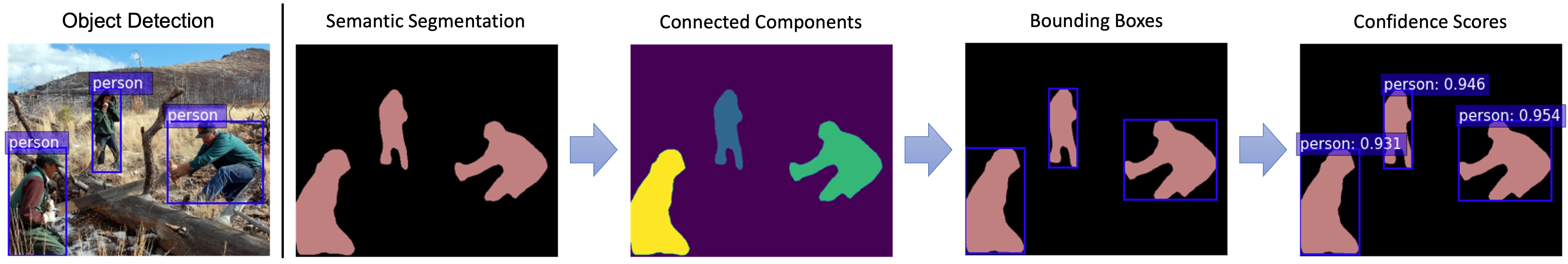}
    \caption{Align Semantic Segmentation with Object Detection (PASCAL VOC)}
    \label{fig:align-seg-to-det}
\end{figure*}

\noindent {\bf Tier-2 Heterogeneous Ensemble with Connected Component Labeling based Alignment.\/} We introduce a connected component labeling (CCL) based alignment algorithm to align and convert the outputs of semantic segmentation into the format used by the object detection models. Algorithm~\ref{alg:ccl-alignment} provides a sketch of the pseudo code.
It follows three main steps.
{\it First}, we only consider a specific class label $c$ each time and eliminate all others (not $c$) by setting them as 0 (background). Hence, we can perform connected component labeling to identify connected components (Line~\ref{alg-ccl-align:connected-component-labeling}) and assign each of them a component label $cl$.
{\it Second}, with an identified connected component, we then use a $BBox$ function to calculate the object bounding box by the min/max values of the coordinates in this component (Line~\ref{alg-ccl-align:bbox}).
{\it Third}, in order to determine a confidence score for this bounding box, we found that averaging the confidence scores for the class label $c$ within this component (Line~\ref{alg-ccl-align:average-confidence-scores}) can produce a very good confidence score for object detection.
Figure~\ref{fig:align-seg-to-det} (last three columns) illustrates this alignment process, and the confidence scores calculated by this method for all three \verb|person| detections are above 0.93, indicating high detection confidence, which is on par with well-trained object detectors.

\section{Formal Analysis}

We then formally analyze the robustness of heterogeneous deep ensembles using ensemble vulnerability. We show that the adversarial vulnerability of an ensemble of $N$ independent member models is proportional to $1/\sqrt{N}$, which demonstrates the increasing difficulty of attacking ensemble teams when the number of independent member models in an ensemble team increases. Moreover, low correlation among member models of an ensemble team will dramatically reduce such ensemble vulnerability. Therefore, the robustness of an ensemble team can be significantly improved by (1) increasing the number of independent member models and/or (2) reducing the correlation among member models for an ensemble team.

Assume that a DNN model $M_i$ is well-trained by minimizing a loss function $\ell_i(M_i(\x), \y)$ for the input sample $\x$ and ground-truth label $\y$. Even though the specific loss functions may vary for different DNN models and different learning tasks, such as object detection and semantic segmentation, they share the same objective to guide $M_i$ to produce accurate prediction results. Let $\mathcal{L}_i(\x, \y) = \ell_i(M_i(\x), \y)$. For simplicity, we can define the total loss for an ensemble team of $N$ member models as a weighted sum of all $N$ member model losses:
\begin{equation}
\small
\mathcal{L}_{ens}(\x, \y) = \sum_{i=1}^{N}{w_i \mathcal{L}_i(\x, \y)}
\label{eq:ens-loss}
\end{equation}
We assume all member models share an equal weight, that is $w_i=\frac{1}{N}$ for all $i=1,..., N$, such that $\sum_{i=1}^N w_i = 1$.

Under the adversarial setting, the goal of an adversary is to deceive the ensemble team to return erroneous prediction results imperceptibly. Hence, given an input sample $\x$, the objective function for generating an adversarial sample $\x_{adv}$ to perform adversarial attacks can be formulated as:
\begin{equation}
\small
\argmax_{\x_{adv}} \mathcal{L}_{ens}(\x_{adv}, \y) \quad
\text{s.t.} \quad ||\x_{adv}-\x||_p \leq \epsilon
\end{equation}
where feasible $\x_{adv}$ should lie in a $p$-norm bounded ball ($B(\x, \epsilon) = \{\x_{adv}, ||\x_{adv}-\x||_p<\epsilon\}$) with radius $\epsilon$ to be human-imperceptible.

\begin{definition}[Ensemble Vulnerability]
\label{def:ensemble-vulnerability}
Based on this observation, we have
\begin{equation}
\small
\Delta \mathcal{L}_{ens}(\x, \y, \epsilon) = \max_{||\delta||_p < \epsilon}{|\mathcal{L}_{ens}(\x+\delta,\y) - \mathcal{L}_{ens}(\x,\y)|}
\end{equation}
where $\Delta \mathcal{L}_{ens}$ captures the maximum absolute change of the ensemble loss value for a bounded change ($\delta$) on the input $\x$, such as adversarial perturbations. The vulnerability of an ensemble team can be measured by $\mathbb{E}_{\x} [\Delta \mathcal{L}_{ens}(\x, \y, \epsilon)]$, which is the expectation of $\Delta \mathcal{L}_{ens}$ on the input $\x$. 
\end{definition}
A robust ensemble is expected to have a small change in the loss ($\Delta \mathcal{L}_{ens}$) for a given input perturbation $\delta$. In practice, the input perturbation should be imperceptible, i.e., $\epsilon \rightarrow 0$. We assume that the loss functions are differentiable. Hence, $\Delta \mathcal{L}_{ens}$ can be approximated with a first-order Taylor expansion~\cite{first-order-taylor} as
\begin{equation}
\small
\begin{aligned}
\Delta \mathcal{L}_{ens}(\x, \y, \epsilon) &= \max_{||\delta||_p < \epsilon}{|\mathcal{L}_{ens}(\x+\delta,\y) - \mathcal{L}_{ens}(\x,\y)|} \\
&= \max_{||\delta||_p < \epsilon}{|\partial_{\x} \mathcal{L}_{ens}(\x, \y) \delta + O(\delta)|}
\end{aligned}
\label{eq:delta-L-expansion}
\end{equation}

We then formally show that ensembles of highly independent member models can significantly improve ensemble robustness under adversarial settings.

\begin{lemma}\label{lemma:vulnerability-adv}
Under a given adversarial attack, we can derive the ensemble vulnerability as
\begin{equation}
\small
\mathbb{E}_{\x} [\Delta \mathcal{L}_{ens}(\x, \y, \epsilon)] \propto \mathbb{E}_{\x}\left[||\partial_{\x} \mathcal{L}_{ens}(\x, \y)||_q \right]
\end{equation}
\end{lemma}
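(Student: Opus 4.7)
The plan is to build directly on the first-order Taylor expansion already written out in Eq.~\eqref{eq:delta-L-expansion}. In the regime $\epsilon \to 0$ invoked immediately before the lemma, the remainder $O(\delta)$ is of strictly higher order than the linear term $\partial_{\x}\mathcal{L}_{ens}(\x,\y)\,\delta$, so up to lower-order corrections one has
\begin{equation}
\small
\Delta \mathcal{L}_{ens}(\x,\y,\epsilon) \;\approx\; \max_{\|\delta\|_p < \epsilon}\,\bigl|\partial_{\x}\mathcal{L}_{ens}(\x,\y)\,\delta\bigr|.
\end{equation}
This reduces the statement to a purely analytic claim about the worst-case linearized response of the ensemble loss to a $p$-norm bounded perturbation.

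Next I would evaluate the right-hand side by H\"{o}lder's inequality with dual exponent $q$ satisfying $1/p + 1/q = 1$. H\"{o}lder gives the upper bound $|\partial_{\x}\mathcal{L}_{ens}(\x,\y)\,\delta| \le \|\partial_{\x}\mathcal{L}_{ens}(\x,\y)\|_q\,\|\delta\|_p$, and this bound is attained (or approached, in the case of the open ball) by choosing $\delta$ aligned with the dual-norm maximizer of the gradient: for the canonical adversarial threat models $p=\infty$ (with $q=1$) the optimal $\delta$ is $\epsilon \cdot \mathrm{sign}(\partial_{\x}\mathcal{L}_{ens})$, and for $p=2$ (with $q=2$) it is the normalized gradient scaled by $\epsilon$. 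Taking the supremum over $\|\delta\|_p < \epsilon$ therefore yields $\Delta \mathcal{L}_{ens}(\x,\y,\epsilon) \approx \epsilon \cdot \|\partial_{\x}\mathcal{L}_{ens}(\x,\y)\|_q$.

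To finish, I would take the expectation over $\x$ of both sides. Since $\epsilon$ is a fixed perturbation budget specified by the adversary and not a function of $\x$, it can be absorbed into the constant of proportionality, giving $\mathbb{E}_{\x}[\Delta \mathcal{L}_{ens}(\x,\y,\epsilon)] \propto \mathbb{E}_{\x}[\|\partial_{\x}\mathcal{L}_{ens}(\x,\y)\|_q]$, which is exactly the statement of Lemma~\ref{lemma:vulnerability-adv}.

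The main obstacle is really just bookkeeping rather than deep mathematics: one must (i) justify discarding the $O(\delta)$ term uniformly in $\x$ so that the approximation survives the expectation, and (ii) handle the strict inequality $\|\delta\|_p < \epsilon$, since H\"{o}lder achieves equality on the boundary. Point (ii) is resolved either by passing to the closure of the ball (standard in the adversarial-robustness literature) or by observing that the supremum over the open ball still equals $\epsilon \cdot \|\partial_{\x}\mathcal{L}_{ens}\|_q$ as a limit. With these two points addressed, the proportionality follows cleanly and sets up the subsequent discussion in which the structure of $\partial_{\x}\mathcal{L}_{ens} = \tfrac{1}{N}\sum_i \partial_{\x}\mathcal{L}_i$ is used to derive the $1/\sqrt{N}$ scaling and the role of negative correlation among member models.
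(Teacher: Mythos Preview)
Your proposal is correct and follows essentially the same route as the paper: start from the first-order Taylor expansion in Eq.~\eqref{eq:delta-L-expansion}, drop the higher-order term, use H\"older/dual-norm duality to evaluate the constrained maximum as $\|\partial_{\x}\mathcal{L}_{ens}\|_q\cdot\|\delta\|_p$, and then absorb the fixed $\|\delta\|_p$ into the proportionality constant when taking $\mathbb{E}_{\x}$. The only cosmetic difference is that the paper invokes ``the definition of dual norm'' directly rather than spelling out H\"older and its attainment, and it does not dwell on the open-ball or uniformity caveats you raise.
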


\begin{proof}[Proof of Lemma~\ref{lemma:vulnerability-adv}]
Following Equation~(\ref{eq:delta-L-expansion}), we have 
\begin{equation}
\small
\begin{aligned}
\Delta \mathcal{L}_{ens}(\x, \y, \epsilon) &= \max_{||\delta||_p < \epsilon}{|\partial_{\x} \mathcal{L}_{ens}(\x, \y) \delta + O(\delta)|} \\
&\approx \max_{||\delta||_p < \epsilon} |\partial_{\x} \mathcal{L}_{ens}(\x, \y) \delta |\\
&= ||\partial_{\x} \mathcal{L}_{ens}(\x, \y)||_q \cdot || \delta ||_p
\end{aligned}
\label{eq:delta-L-dual-norm}
\end{equation}
where $\frac{1}{p} + \frac{1}{q} = 1$ and $1 \leq p, q \leq \infty$ according to the definition of dual norm. Under a given adversarial attack, where $||\delta||_p $ is constant, we have 
\begin{equation}
\small
\begin{aligned}
\mathbb{E}_{\x}[\Delta \mathcal{L}_{ens}(\x, \y, \epsilon)] &\approx \mathbb{E}_{\x}\left[||\partial_{\x} \mathcal{L}_{ens}(\x, \y)||_q \cdot || \delta ||_p \right] \\
&\propto \mathbb{E}_{\x}\left[||\partial_{\x} \mathcal{L}_{ens}(\x, \y)||_q \right]
\end{aligned}
\end{equation}
\end{proof}

Let $\boldr_i$ denote the gradient of a member model $M_i$ on the input $\x$, i.e., $\boldr_i = \partial_{\x} \mathcal{L}_i(\x, \y)$. The correlation between the gradients of $M_i$ and $M_j$ is $\mathrm{Corr}(\boldr_i, \boldr_j)$. We have Theorem~\ref{theorem:vulnerability-adv-corr}.

\begin{theorem}[Ensemble Vulnerability under Adversarial Attacks]
\label{theorem:vulnerability-adv-corr}
If the gradients of member models of an ensemble team are i.i.d. with $\sigma^2 = \mathrm{Cov}(\boldr_i, \boldr_i)$ and zero mean (because a well-trained model is converged), we have 
\begin{equation}
\small
\begin{aligned}
\mathbb{E}_{\x}[\Delta \mathcal{L}_{ens}(\x, \y, \epsilon)]  &\propto \mathbb{E}_{\x}\left[\lVert \partial_{\x} \mathcal{L}_{ens}(\x, \y)\rVert_2 \right] \\
&= \sqrt{\frac{\sigma^2}{N}} \sqrt{1 + \frac{2}{N} \sum_{i=1}^N{\sum_{j=1}^{i-1}{\mathrm{Corr}(\boldr_i, \boldr_j)}}}
\end{aligned}
\label{eq:vulnerability-adv-corr-theorem}
\end{equation}
where the ensemble vulnerability $\mathbb{E}_{\x}[\Delta \mathcal{L}_{ens}(\x, \y, \epsilon)]$ is proportional to $\sqrt{\frac{\sigma^2}{N}} \sqrt{1 + \frac{2}{N} \sum_{i=1}^N{\sum_{j=1}^{i-1}{\mathrm{Corr}(\boldr_i, \boldr_j)}}}$ under a given adversarial attack.
\end{theorem}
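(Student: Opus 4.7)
The plan is to chain Lemma~\ref{lemma:vulnerability-adv} (specialized to $q=2$, which pairs with an $\ell_2$-bounded adversary via dual norms) with a direct second-moment computation of the ensemble gradient. Using the equal-weight assumption $w_i = 1/N$ from Equation~(\ref{eq:ens-loss}) and linearity of differentiation, I would first write
\begin{equation}
\small
\partial_{\x}\mathcal{L}_{ens}(\x,\y) \;=\; \frac{1}{N}\sum_{i=1}^{N} \boldr_i,
\end{equation}
so that Lemma~\ref{lemma:vulnerability-adv} reduces the theorem to analyzing $\mathbb{E}_{\x}\bigl[\lVert \tfrac{1}{N}\sum_i \boldr_i \rVert_2\bigr]$.

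Next I would compute the second moment rather than the first, since it expands cleanly. Using the identity $\lVert \sum_i \boldr_i \rVert_2^2 = \sum_i \lVert \boldr_i \rVert_2^2 + 2\sum_{i<j} \langle \boldr_i, \boldr_j \rangle$ and the zero-mean hypothesis, each $\mathbb{E}[\langle \boldr_i,\boldr_j\rangle]$ coincides with the (vectorial) covariance $\mathrm{Cov}(\boldr_i,\boldr_j)$, while $\mathbb{E}[\lVert \boldr_i \rVert_2^2] = \mathrm{Cov}(\boldr_i,\boldr_i) = \sigma^2$ by the identical-distribution assumption. Since $\mathrm{Corr}(\boldr_i,\boldr_j) = \mathrm{Cov}(\boldr_i,\boldr_j)/\sigma^2$, this yields
\begin{equation}
\small
\mathbb{E}_{\x}\!\left[\Bigl\lVert \tfrac{1}{N}\sum_{i=1}^{N}\boldr_i \Bigr\rVert_2^{2}\right]
\;=\; \frac{\sigma^2}{N}\left(1 + \frac{2}{N}\sum_{i=1}^{N}\sum_{j=1}^{i-1}\mathrm{Corr}(\boldr_i,\boldr_j)\right).
\end{equation}
Taking square roots reproduces exactly the right-hand side of Equation~(\ref{eq:vulnerability-adv-corr-theorem}), and combining with Lemma~\ref{lemma:vulnerability-adv} gives the claimed proportionality.

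The main obstacle is the gap between $\mathbb{E}_{\x}[\lVert \cdot \rVert_2]$ (what Lemma~\ref{lemma:vulnerability-adv} delivers) and $\sqrt{\mathbb{E}_{\x}[\lVert \cdot \rVert_2^{2}]}$ (what the second-moment expansion naturally produces); by Jensen's inequality these two quantities differ by a multiplicative factor that depends on the distribution of the gradient over $\x$. I would close this by noting that the $\propto$ symbol in the theorem absorbs this distribution-dependent constant: under mild concentration hypotheses on the per-sample gradient norm (for instance, sub-Gaussian or approximately Gaussian $\boldr_i$, which is the standard regime invoked when a first-order Taylor expansion is already being used in Equation~(\ref{eq:delta-L-expansion})), $\mathbb{E}[\lVert X \rVert_2]$ and $\sqrt{\mathbb{E}[\lVert X \rVert_2^{2}]}$ are equivalent up to a constant, so the stated proportionality is preserved. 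Everything else is a routine expansion, so this Jensen-gap step is where care is required.
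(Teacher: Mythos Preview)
Your proposal is correct and mirrors the paper's proof almost step for step: both derive $\partial_{\x}\mathcal{L}_{ens}=\tfrac{1}{N}\sum_i\boldr_i$ from Equation~(\ref{eq:ens-loss}), expand $\mathbb{E}[\lVert\cdot\rVert_2^2]$ into diagonal and cross terms, use the zero-mean assumption to identify $\mathbb{E}[\boldr_i\boldr_j]$ with $\mathrm{Cov}(\boldr_i,\boldr_j)$, factor out $\sigma^2$ to obtain correlations, and then invoke Lemma~\ref{lemma:vulnerability-adv} with $p=q=2$. Your explicit discussion of the Jensen gap between $\mathbb{E}[\lVert\cdot\rVert_2]$ and $\sqrt{\mathbb{E}[\lVert\cdot\rVert_2^2]}$ is a point the paper's own proof passes over silently (it simply writes the equality), so you are in fact being more careful there.
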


\begin{proof}[Proof of Theorem~\ref{theorem:vulnerability-adv-corr}]
We follow Equation~(\ref{eq:ens-loss}) and have the joint gradient vector $\R$ for the ensemble team as
\begin{equation}
\small
\begin{aligned}
\R &= \partial_{\x} \mathcal{L}_{ens}(\x, \y) \\
&= \partial_{\x} \sum_{i=1}^{N}{w_i \mathcal{L}_i(\x, \y)} \\
&= \partial_{\x} \left(\frac{1}{N} \sum_{i=1}^{N}{\mathcal{L}_i(\x, \y)}\right) \\
&= \frac{1}{N} \sum_{i=1}^{N}{\partial_{\x}\mathcal{L}_i(\x, \y)} \\
&= \frac{1}{N} \sum_{i=1}^{N} \boldr_i 
\end{aligned}
\end{equation}
Then, the expectation of the square of the $L_2$-norm for this joint gradient $\R$ is
\begin{equation}
\small
\begin{aligned}
\mathbb{E}(\lVert \R \rVert_2^2) &= \mathbb{E}\left(\lVert \frac{1}{N} \sum_{i=1}^N{\boldr_i}\rVert_2^2\right) \\
&= \frac{1}{N^2}\mathbb{E}\left(\sum_{i=1}^N{\lVert\boldr_i\rVert_2^2} + 2 \sum_{i=1}^N{\sum_{j=1}^{i-1}{\boldr_i \boldr_j}}\right) \\
&= \frac{1}{N^2}\left(\sum_{i=1}^N\mathbb{E}({\lVert \boldr_i \rVert_2^2}) + 2 \sum_{i=1}^N{\sum_{j=1}^{i-1}\mathbb{E}({\boldr_i \boldr_j})}\right)
\end{aligned}
\label{eq:R-expansion-1}
\end{equation}
Given the gradients of all member models, including $M_i$ and $M_j$, are identically distributed, let $\sigma^2 = \mathrm{Cov}(\boldr_i, \boldr_i)$. Moreover, the model $M_i$ has converged with zero mean on its gradients, i.e., $\mathbb{E}(\boldr_i) = 0$. Hence, $\mathrm{Cov}(\boldr_i, \boldr_j) = \mathbb{E}(\boldr_i \boldr_j) - \mathbb{E}(\boldr_i)\mathbb{E}(\boldr_j) = \mathbb{E}(\boldr_i \boldr_j)$. Following the above Equation~(\ref{eq:R-expansion-1}), we have
\begin{equation}
\small
\begin{aligned}
\mathbb{E}(\lVert \R \rVert_2^2)
&= \frac{1}{N^2}\left(\sum_{i=1}^N\mathbb{E}({\lVert \boldr_i \rVert_2^2}) + 2 \sum_{i=1}^N{\sum_{j=1}^{i-1}\mathbb{E}({\boldr_i \boldr_j})}\right) \\
&= \frac{1}{N^2}\left(\sum_{i=1}^N\mathbb{E}(\boldr_i \boldr_i) + 2 \sum_{i=1}^N{\sum_{j=1}^{i-1}\mathbb{E}(\boldr_i \boldr_j)} \right)\\
&= \frac{1}{N^2}\left(\sum_{i=1}^N \mathrm{Cov}(\boldr_i, \boldr_i) + 2 \sum_{i=1}^N{\sum_{j=1}^{i-1}{\mathrm{Cov}(\boldr_i, \boldr_j)}}\right) \\
&= \frac{1}{N}\left(\sigma^2 + \frac{2}{N} \sum_{i=1}^N{\sum_{j=1}^{i-1}{\mathrm{Cov}(\boldr_i, \boldr_j)}}\right) \\
&= \frac{\sigma^2}{N}\left(1 + \frac{2}{N} \sum_{i=1}^N{\sum_{j=1}^{i-1}{\frac{\mathrm{Cov}(\boldr_i, \boldr_j)}{\sigma^2}}}\right) \\
&= \frac{\sigma^2}{N}\left(1 + \frac{2}{N} \sum_{i=1}^N{\sum_{j=1}^{i-1}{\mathrm{Corr}(\boldr_i, \boldr_j)}}\right)
\end{aligned}
\end{equation}
Hence, based on Lemma~\ref{lemma:vulnerability-adv}, $p=2$ and $q=2$, the ensemble vulnerability of this ensemble model is
\begin{equation}
\small
\begin{aligned}
\mathbb{E}_{\x}[\Delta \mathcal{L}_{ens}(\x, \y, \epsilon)]  &\propto \mathbb{E}_{\x}\left[\lVert \partial_{\x} \mathcal{L}_{ens}(\x, \y)\rVert_2 \right] \\
&= \sqrt{\frac{\sigma^2}{N}} \sqrt{1 + \frac{2}{N} \sum_{i=1}^N{\sum_{j=1}^{i-1}{\mathrm{Corr}(\boldr_i, \boldr_j)}}}
\end{aligned}
\label{eq:vulnerability-adv-corr}
\end{equation}
\end{proof}

The ensemble vulnerability for an ensemble team primarily depends on two factors: (1) the team size $N$ and (2) the member model correlations $\mathrm{Corr}(\boldr_i, \boldr_j)$. For an ideal case that all member models are independent of each other, we have $\mathrm{Corr}(\boldr_i, \boldr_j) = 0$ as follows.
\begin{equation}
\small
\begin{aligned}
\mathbb{E}_{\x}[\Delta \mathcal{L}_{ens}(\x, \y, \epsilon)] &\propto \mathbb{E}_{\x}\left[||\partial_{\x} \mathcal{L}_{ens}(\x, \y)||_2 \right] \\
&= \sqrt{ \frac{\sigma^2}{N}} \propto \frac{1}{\sqrt{N}}
\end{aligned}
\end{equation}
Therefore, the ensemble vulnerability is proportional to $1/\sqrt{N}$. It shows that the ensemble robustness can be significantly improved by increasing $N$, that is adding more independent member models. However, another case is when the correlations among member models are high, the ensemble may not improve the robustness. 
For an extreme case when all member models are perfect duplicates with $\mathrm{Corr}(\boldr_i, \boldr_j) = 1$, the following equation shows $\mathbb{E}_{\x}[\Delta \mathcal{L}_{ens}(\x, \y, \epsilon)] \propto \sigma$. 
\begin{equation}
\small
\begin{aligned}
\mathbb{E}_{\x}[\Delta \mathcal{L}_{ens}(\x, \y, \epsilon)] &\propto \mathbb{E}_{\x}\left[||\partial_{\x} \mathcal{L}_{ens}(\x, \y)||_2 \right] \\
&= \sqrt{\frac{\sigma^2}{N}} \sqrt{1 + \frac{2}{N}\frac{N(N-1)}{2}} \\
&= \sigma
\end{aligned}
\end{equation}
\begin{table}[h]
\centering
\caption{Base Models for MS COCO and PASCAL VOC}
\label{table:base-model-pools}
\scalebox{0.682}{
\small
\begin{tabular}{c|cc|cc}
\hline
Dataset & \multicolumn{2}{|c|}{MS COCO}           & \multicolumn{2}{c}{PASCAL VOC}          \\ \hline
ID      & Model                   & mAP (\%) & Model                 & mAP (\%) \\ \hline
0       & EfficientDetB5          & 49.3     & SSD512 (VGG16)        & 78.85    \\ \hline
1       & EfficientDetB6          & 50.1     & SSD512 (MobileNet)    & 75.53    \\ \hline
2       & EfficientDetB7          & 50.8     & \textbf{YOLOv3 (DarkNet53)}    & \textbf{81.48}    \\ \hline
3       & \textbf{DetectoRS (ResNeXt-101)} & \textbf{52.2}     & FasterRCNN (ResNet50) & 78.35    \\ \hline
4       & DetectoRS (ResNet50)    & 50.4     & CenterNet (ResNet18)  & 69.48    \\ \hline
5       & YOLOv5 (TTA)            & 48.5     & DeepLabv3 (ResNet152) & 56.49    \\ \hline
MAX     & DetectoRS (ResNeXt-101) & \textbf{52.2}     & YOLOv3 (DarkNet53)    & \textbf{81.48}    \\ \hline
AVG     &                         & 50.2     &                       & 73.36    \\ \hline
MIN     & EfficientDetB5          & 48.5     & DeepLabv3 (ResNet152) & 56.49   
\\ \hline
\end{tabular}
} 
\end{table}
\begin{table*}[h!]
\centering
\caption{Focal Diversity and mAP of Object Detector Ensembles on MS COCO, the higher focal diversity will select more robust ensembles (bold face) with higher mAP and smaller team size compared to the entire ensemble of all 6 base models. In comparison to the best single object detector (DetectoRS (ResNeXt-101) in Table~\ref{table:base-model-pools}) with 52.2\% mAP, all tier-1 heterogeneous ensembles of high diversity (bold face) significantly outperform this best single object detector by 1.2\%$\sim$1.7\% in mAP.}
\label{table:div-mAP-coco}
\small
\scalebox{1.0}{
\begin{tabular}{c|c|c|c|c|c|c|c|c|c|c}
\hline
\multicolumn{2}{c|}{Team Size}                  & 6           & \multicolumn{2}{c|}{5} & \multicolumn{2}{c|}{4} & \multicolumn{2}{c|}{3} & \multicolumn{2}{c}{2} \\ \hline
\multicolumn{2}{c|}{Ensemble Team}                                                        & 0,1,2,3,4,5 & \textbf{0,1,2,3,4} & 0,1,2,3,5 & \textbf{1,2,3,4} & 0,1,2,5 & \textbf{1,2,4} & 0,1,2 & \textbf{2,4}   & 1,2  \\ \hline
\multirow{2}{*}{\begin{tabular}[c]{@{}c@{}}Focal\\ Diversity\end{tabular}} & Pairwise    & 0.149       & \textbf{0.557}     & 0.092     & \textbf{0.774}   & 0       & \textbf{0.574} & 0     & \textbf{0.907} & 0    \\ \cline{2-11}
                                                                           & Non-pairwise & 0.088       & \textbf{0.518}     & 0.079     & \textbf{0.737}   & 0       & \textbf{0.497} & 0     & \textbf{0.888} & 0    \\ \hline
\multicolumn{2}{c|}{\multirow{2}{*}{mAP (\%)}}                                                               & \multirow{2}{*}{53.2}        & \textbf{53.9}      & 52.4      & \textbf{53.7}    & 50.7    & \textbf{53.6}  & 51.5  & \textbf{53.4}  & 51.4 \\
\multicolumn{2}{c|}{}  & & \tabfiguresmall{width=0.02\textwidth}{c} & \tabfiguresmall{width=0.02\textwidth}{w} & \tabfiguresmall{width=0.02\textwidth}{c} & \tabfiguresmall{width=0.02\textwidth}{w} & \tabfiguresmall{width=0.02\textwidth}{c} & \tabfiguresmall{width=0.02\textwidth}{w} & \tabfiguresmall{width=0.02\textwidth}{c} & \tabfiguresmall{width=0.02\textwidth}{w}\\ \hline
\end{tabular}
} 
\end{table*}
\begin{figure*}[h!]
\centering
    \includegraphics[width=1.0\textwidth]{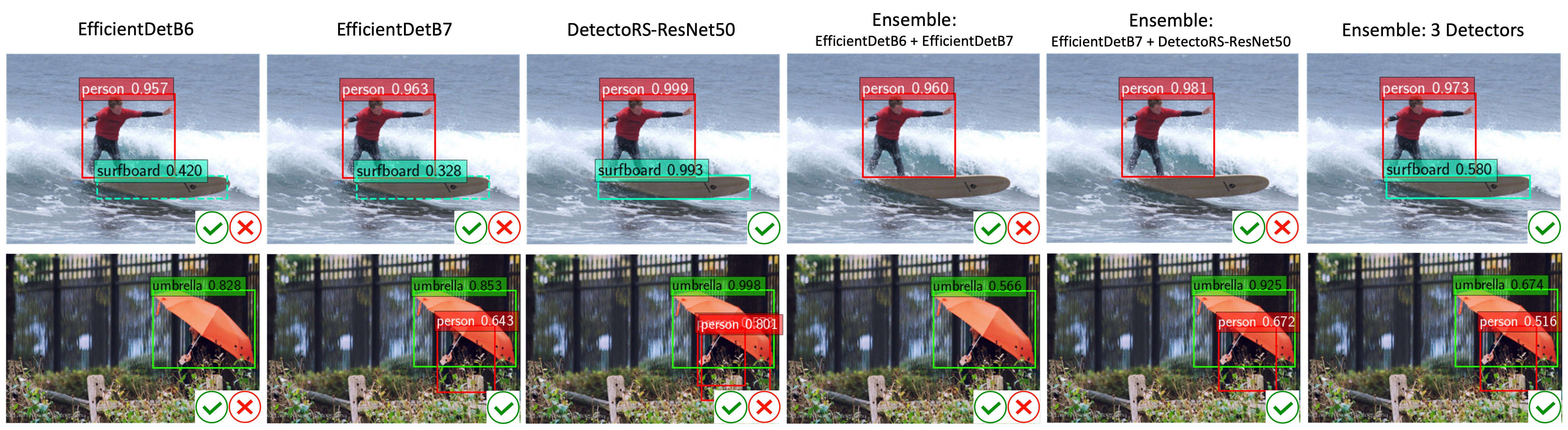}
    \caption{Object Detection Examples on MS COCO, visual examples illustrate that the ensemble of 3 detectors with high focal diversity has higher mAP compared to any of the sub-ensembles of 2 detectors.}
    \label{fig:ensemble-examples-coco-exp}
\end{figure*}
It indicates that the ensemble will have the same robustness as a single model regardless of how many member models it has.
We follow~\cite{EnsembleBenchCVPR,EnsembleBenchICDM} and present novel focal diversity metrics for object detection in the Appendix to capture such negative correlation among member detectors of an ensemble team, including pairwise and non-pairwise focal diversity metrics.
Our heterogeneous ensembles leverage two-tier heterogeneity in model learning to significantly reduce the member model correlations and thus improve ensemble robustness.

\section{Experimental Analysis}\label{sec:exp-analysis}

We evaluate the robustness of the proposed two-tier heterogeneous ensemble approach on benchmark datasets: MS COCO~\cite{coco} and PASCAL VOC~\cite{voc}, using Intel i7-10700K with the NVIDIA GeForce RTX 3090 (24GB) GPU, installed with Ubuntu 20.04 and CUDA 11.
Table~\ref{table:base-model-pools} lists all base models for MS COCO and PASCAL VOC. The six heterogeneous object detectors for MS COCO include EfficientDetB5-B7~\cite{efficientdet}, two DetectoRS models with different backbones~\cite{detectors}, and YOLOv5 with test time augmentation (TTA)~\cite{yolov5,wbf}. For PASCAL VOC, we use five heterogeneous object detectors: SSD512 (VGG16, MobileNet)~\cite{ssd}, YOLOv3 (DarkNet53)~\cite{yolov3}, FasterRCNN (ResNet50)~\cite{faster-rcnn} and CenterNet (ResNet18)~\cite{centernet}, and one semantic segmentation (SS) model, DeepLabv3~\cite{deeplabv3}, which is converted into object detections (OD) using our connected component based alignment method.

\noindent {\bf Diversity and Robustness of Tier-1 Heterogeneous Ensemble.\/} We first measure the focal diversity scores and mean average precision (mAP) for the tier-1 ensembles of different sizes on MS COCO. The six heterogeneous object detectors can form ensembles of size $N$ ($2 \le N \le 6$). For example, the COCO ensemble made up by all six detectors is \verb|0,1,2,3,4,5|. Multiple sub-ensembles of size $N$ can be composed from the six base models ($2 \le N \le 5$). We can rank the sub-ensembles of size $N$ by their focal diversity scores. Table~\ref{table:div-mAP-coco} shows two extreme examples for each given size $N$: one is the ensemble of high diversity in bold, which will be selected, and another is the ensemble of low diversity, which will be pruned out. We make two observations.
{\it First,} ensembles with high focal diversity also have high mAP. Compared to the low diversity ensembles, the high diversity ensembles can effectively improve the mAP by 1.5\%$\sim$2.1\%. For example, the high diversity ensemble team \verb|1,2,4| achieved the mAP of 53.6\%, improving the mAP of the low diversity ensemble \verb|0,1,2| by 2.1\%. 
{\it Second,} all ensembles of high diversity outperform the ensemble made up of all six detectors for COCO with 53.2\% mAP, outperforming the best base detection model DetectoRS (ResNeXt-101) of 52.2\% mAP by 1.2\%$\sim$1.7\%.

\begin{table*}[h]
\centering
\caption{Focal Diversity and mAP of Object Detector Ensembles on PASCAL VOC. Bold highlighted sub-ensemble teams have high focal diversity scores (third and fourth rows), resulting in higher ensemble mAP (last row). The top-2 performing sub-ensembles are the two-tier heterogeneous ensembles: \protect\texttt{0,1,2,3,5} and \protect\texttt{0,2,3,5}, both include the semantic segmentation model \protect\texttt{5} (DeepLabv3-ResNet152).}
\label{table:div-mAP-voc}
\small
\scalebox{1.0}{
\begin{tabular}{c|c|c|c|c|c|c|c|c|c|c}
\hline
\multicolumn{2}{c|}{Team Size}                                                            & 6           & \multicolumn{2}{c|}{5} & \multicolumn{2}{c|}{4} & \multicolumn{2}{c|}{3} & \multicolumn{2}{c}{2} \\ \hline
\multicolumn{2}{c|}{Ensemble Team}                                                        & 0,1,2,3,4,5 & \textbf{0,1,2,3,5} & 0,1,2,4,5 & \textbf{0,2,3,5}   & 0,1,4,5   & \textbf{0,2,3}     & 1,3,4     & \textbf{2,3}       & 3,4       \\ \hline
\multirow{2}{*}{\begin{tabular}[c]{@{}c@{}}Focal\\ Diversity\end{tabular}} & Pairwise    & 0.247       & \textbf{0.997}     & 0         & \textbf{1}         & 0.024     & \textbf{0.991}     & 0.277         & \textbf{1}         & 0         \\ \cline{2-11}
                                                                           & Non-pairwise & 0.156       & \textbf{1}         & 0         & \textbf{1}         & 0.019     & \textbf{0.995}     & 0.227         & \textbf{1}         & 0         \\ \hline
\multicolumn{2}{c|}{\multirow{2}{*}{mAP (\%)}}                                                                  & \multirow{2}{*}{84.13}       & \textbf{84.43}     & 83.55     & \textbf{84.25}     & 81.70     & \textbf{84.15}     & 80.31     & \textbf{83.26}     & 77.22 \\ 
\multicolumn{2}{c|}{}  & & \tabfiguresmall{width=0.02\textwidth}{c} & \tabfiguresmall{width=0.02\textwidth}{w} & \tabfiguresmall{width=0.02\textwidth}{c} & \tabfiguresmall{width=0.02\textwidth}{w} & \tabfiguresmall{width=0.02\textwidth}{c} & \tabfiguresmall{width=0.02\textwidth}{w} & \tabfiguresmall{width=0.02\textwidth}{c} & \tabfiguresmall{width=0.02\textwidth}{w}\\ \hline
\end{tabular}
} 
\end{table*}

\begin{figure*}[h!]
\centering
    \includegraphics[width=1.0\textwidth]{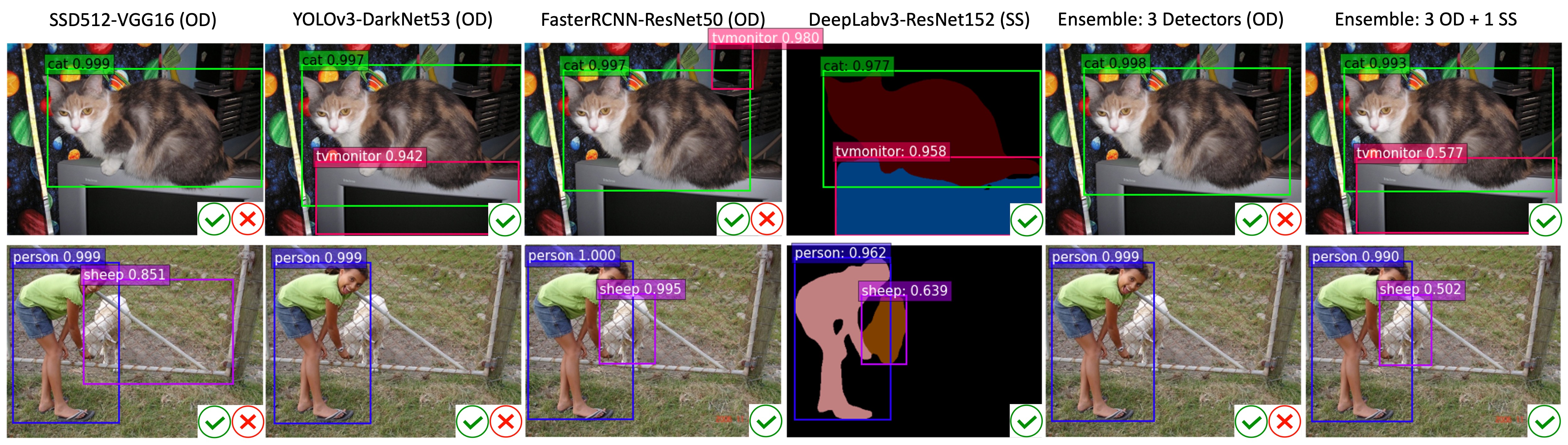}
    \caption{Object Detection and Segmentation Examples on PASCAL VOC, visual examples illustrate scenarios of heterogeneous ensembles (6th column: 3OD + 1SS) outperforming homogeneous ensembles (5th column: 3 Detectors).}
    \label{fig:ensemble-seg-examples-voc-exp}
\end{figure*}

Figure~\ref{fig:ensemble-examples-coco-exp} provides two visualization examples on COCO by comparing the three model ensemble \verb|1,2,4| (i.e., EfficientDetB6, EfficientDetB7 and DetectoRS-ResNet50 in the last column) with the ensembles of two detectors \verb|1,2| (i.e., EfficientDetB6 and EfficientDetB7) and \verb|2,4| (i.e., EfficientDetB7 and DetectoRS-ResNet50) in the fourth and fifth columns respectively. It is observed that the high diversity ensemble \verb|1,2,4| can make correct detections even though two out of three member object detectors make partial prediction errors. This is because the ensembles of high focal diversity can effectively repair the mistakes and output the correct object detection results through our confidence weighted bounding box ensemble consensus method, showing the improved robustness under both benign scenarios and under attacks. Furthermore, heterogeneous object detectors tend to make different errors on negative samples. Consider the first image in Figure~\ref{fig:ensemble-examples-coco-exp}, EfficientDetB6 and EfficientDetB7 detected the \verb|surfboard| with the confidence scores of 0.420 and 0.328 (in the dashed bounding box) respectively, which is lower than the display threshold of 0.5, while DetectoRS-ResNet50 successfully detected the \verb|surfboard| with a high confidence score of 0.993. Our weighted bounding box ensemble consensus method can effectively aggregate these results and detect the \verb|surfboard| object with a higher confidence score of 0.580 than the display threshold.
In contrast, the member models in the low diversity ensemble \verb|1,2| tend to make similar detections (see the first row) and often fail to repair each other's prediction errors unlike the high diversity ensemble \verb|2,4|, which explains the reduced mAP for these low diversity ensembles.

\noindent\textbf{Diversity and Robustness of Two-tier Heterogeneous Ensemble.} We next evaluate the diversity and robustness of our proposed heterogeneous ensembles on PASCAL VOC. We introduce another tier of heterogeneity by adding model \verb|5| DeepLabv3-ResNet152, a pre-trained semantic segmentation model, into the base model pool. Table~\ref{table:div-mAP-voc} reports the focal diversity scores and mAP for the entire ensemble of six models, high diversity ensembles and low diversity ensembles of size $N$ ($2 \le N \le 5$). We highlight two observations.
{\it First,} all high diversity ensembles significantly outperform the low diversity ensembles in mAP by 0.88\%$\sim$6.04\% and improve the best base model YOLOv3-DarkNet53 of 81.48\% mAP by 1.78\%$\sim$2.95\%.
{\it Second,} the high diversity two-tier heterogeneous ensembles (i.e., \verb|0,1,2,3,5| and \verb|0,2,3,5|, including the semantic segmentation model \verb|5|, DeepLabv3-ResNet152) can further improve the high diversity tier-1 ensembles (e.g., \verb|0,2,3|) in mAP. They offer 2.95\% and 2.77\% increases in mAP respectively over the best single detector YOLOv3 (DarkNet53) of 81.48\% mAP.

\begin{table*}
\centering
\caption{Ensemble Performance (mAP (\%)) under Adversarial Attacks on PASCAL VOC. Both TOG-vanishing and TOG-mislabeling attacks are configured with four settings of adversarial perturbation $\epsilon$.
}
\label{table:det-set-voc-adv}
\small
\scalebox{1.0}{
\begin{tabular}{c|c|c|c|c|c|c|c|c|c|c}
\hline
\multicolumn{2}{c|}{\multirow{2}{*}{Model}} &
  \multirow{2}{*}{Benign} &
  \multicolumn{4}{c|}{TOG-v Attacks (default: $\epsilon$=8/255)} &
  \multicolumn{4}{c}{TOG-m Attacks (default: $\epsilon$=8/255)} \\ \cline{4-11}
\multicolumn{2}{c|}{}                       &         & 4/255   & 8/255   & 16/255  & 32/255  & 4/255   & 8/255    & 16/255   & 32/255    \\ \hline
\multicolumn{2}{c|}{Victim: SSD512 (VGG16)} & 73.11   & \HIGHLIGHT{44.93}   & \HIGHLIGHT{31.23}   & \HIGHLIGHT{25.12}   & \HIGHLIGHT{19.62}   & \HIGHLIGHT{8.40}     & \HIGHLIGHT{1.20}      & \HIGHLIGHT{0.79}     & \HIGHLIGHT{0.66}      \\ \hline
\multirow{6}{*}{Ensemble} &
  \multirow{2}{*}{3 OD} &
  85.98 &
  76.61 &
  67.55 &
  61.22 &
  57.13 &
  76.37 &
  65.93 &
  62.2 &
  58.74 \\
       &                                   & (1.18$\times$) & (1.71$\times$) & (2.16$\times$) & (2.44$\times$) & (2.91$\times$) & (9.09$\times$) & (54.94$\times$) & (78.73$\times$) & (89$\times$)     \\ \cline{2-11}
       & \multirow{2}{*}{4 OD}             & 84.68   & 77.15   & 69.46   & 63.29   & 57.85   & 77.39   & 68.36    & 64.16    & 62.39     \\
       &                                   & (1.16$\times$) & (1.72$\times$) & (2.22$\times$) & (2.52$\times$) & (2.95$\times$) & (9.21$\times$) & (56.97$\times$) & (81.22$\times$) & (94.53$\times$)  \\ \cline{2-11}
       & \multirow{2}{*}{3 OD + 1 SS}      & 86.22   & \textbf{78.77}   & \textbf{70.79}   & \textbf{64.42}   & \textbf{62.43}   & \textbf{79.71}   & \textbf{71.75}    & \textbf{68.26}    & \textbf{66.65}     \\
       &                                   & (1.18$\times$) & (\textbf{1.75$\times$}) & (\textbf{2.27$\times$}) & (\textbf{2.56$\times$}) & (\textbf{3.18$\times$}) & (\textbf{9.49$\times$}) & (\textbf{59.79$\times$}) & (\textbf{86.41$\times$}) & (\textbf{100.98$\times$}) \\ \hline
\end{tabular}
}
\end{table*}

\begin{figure*}[h!]
\centering
    \includegraphics[width=1.0\textwidth]{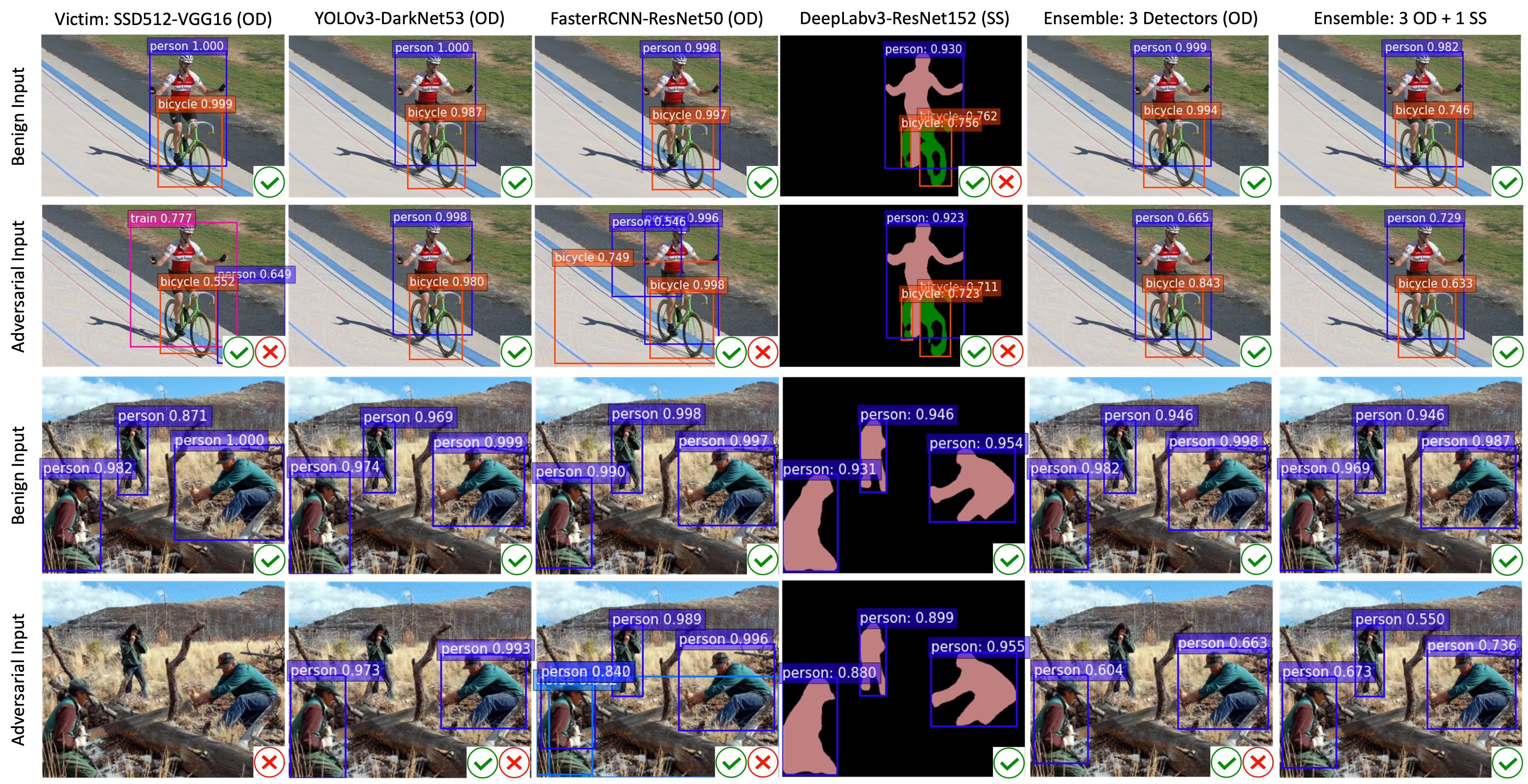}
    \caption{Adversarial Attack Examples on PASCAL VOC (TOG-m: top two images, TOG-v: bottom two images, $\epsilon$ = 8/255)}
    \label{fig:adv-examples-voc-exp}
\end{figure*}

Figure~\ref{fig:ensemble-seg-examples-voc-exp} provides two visualization examples for the high diversity ensembles \verb|0,2,3| (3 OD) and \verb|0,2,3,5| (3 OD + 1 SS). It shows that even though the tier-1 heterogeneous ensemble (\verb|0,2,3|) of three diverse object detectors fails to make correct predictions, by incorporating the semantic segmentation model \verb|5| DeepLabv3-ResNet152, this two-tier heterogeneous ensemble can produce correct object detection results, showing higher robustness on test examples that are negative examples for two or more member models.
For example, both SSD512-VGG16 and FasterRCNN-ResNet50 missed the same object: \verb|tvmonitor| (first row), and the tier-1 ensemble of three object detectors also failed to identify the \verb|tvmonitor|, indicating that this example might be hard to most of the object detectors. By adding DeepLabv3-ResNet152, a model which is trained on the same dataset VOC but for solving a different learning problem, the semantic segmentation, we show that DeepLabv3-ResNet152 can clearly recognize the \verb|tvmonitor| with high confidence. Our two-tier heterogeneous ensemble will combine the detection results derived from three object detectors and one semantic segmentation model, and can make the correct object detections. Table~\ref{table:div-mAP-voc} also shows that the two-tier ensemble (\verb|0,2,3,5|) enhanced by DeepLabv3-ResNet152 can outperform the tier-1 ensemble (\verb|0,2,3|) and offer a slight increase in the overall mAP, even though DeepLabv3-ResNet152 as an individual model has a low mAP of 56.49\%, compared to the three object detectors. This further shows that two-tier heterogeneous ensembles can increase the ensemble diversity of ensemble member models and improve the overall ensemble robustness.

\begin{table*}[!h]
\centering
\caption{Victims: FasterRCNN (ResNet50) $\|$ YOLOv3 (DarkNet53)}
\label{table:adv-faster-rcnn-yolov3-voc}
\small
\scalebox{1.0}{
\begin{tabular}{l|c|c|c}
\hline
\multirow{2}{*}{Model}        & \multirow{2}{*}{Benign mAP (\%)} & \multicolumn{2}{c}{Under attack mAP (\%)} \\ \cline{3-4}
                              &                                  & TOG-m               & TOG-v               \\ \hline
Victim: FasterRCNN (ResNet50) & 80.06                            & 2.14                & 0.14                \\ \hline
\hfill Ensemble Team                    & 85.98 (\textbf{1.07$\times$})                    & 82.29 (\textbf{38.45$\times$})      & 81.23 (\textbf{\textbf{580.21$\times$}})     \\ \hline \hline
Victim: YOLOv3 (DarkNet53)    & 82.98                            & 3.15                & 1.24                \\ \hline
\hfill Ensemble Team                     & 85.98 (\textbf{1.04$\times$})                    & 76.15 (\textbf{24.17$\times$})      & 75.88 (\textbf{61.19$\times$})     \\ \hline 
\end{tabular}}
\end{table*}

\noindent \textbf{Adversarial Robustness of Two-Tier Heterogeneous Ensemble.}
Adversarial attacks often target at misleading one victim (or target) model in practice~\cite{adv-examples-seg-det,robust-adv-perturb-proposal-models,transferable-adv-attack-od,od-attack-tps}. We assume that an adversary only has access to one object detector and construct adversarial examples to attack this victim model. In this set of experiments, we evaluate the adversarial robustness of our proposed two-tier heterogeneous ensembles against such adversarial attacks.
We choose two different TOG~\cite{od-adv-lens-esorics,od-attack-tps} attack algorithms: object mislabeling attacks (TOG-m) and object vanishing attacks (TOG-v), which use a hyperparameter $\epsilon$ to control the maximum perturbation value on image pixels. A larger $\epsilon$ indicates a stronger TOG attack. The default $\epsilon$ is 8/255. We randomly subsample 500 test examples from PASCAL VOC to generate adversarial examples under different $\epsilon$ to attack a victim detector, say SSD512-VGG16, and evaluate the mAP under both benign and adversarial settings for three different ensemble teams: the baseline tier-1 ensemble (3 OD: \verb|0,2,3|), which consists of three object detectors, another tier-1 ensemble (4 OD: \verb|0,2,3,4|), which incorporates another object detector, CenterNet-ResNet18, and the two-tier heterogeneous ensemble (3 OD + 1 SS: \verb|0,2,3,5|), which adds a pre-trained VOC semantic segmentation model (DeepLabv3-ResNet152) to this baseline tier-1 ensemble team (\verb|0,2,3|). Table~\ref{table:det-set-voc-adv} reports the results. Figure~\ref{fig:adv-examples-voc-exp} shows one visualization example under the TOG-v attack. We highlight three interesting observations.
{\it First,} adversarial attacks (TOG-v and TOG-m) can cause a drastic drop in the mAP for the single well-trained object detector, SSD512-VGG16, from the benign mAP of 73.11\% to 19.62\%$\sim$44.93\% under TOG-v and 0.66\%$\sim$8.40\% under TOG-m.
{\it Second,} the transferability of the attack to both DeepLabv3-ResNet52 and the other two object detectors: YOLOv3 or FasterRCNN is not high. Consider the first two rows in Figure~\ref{fig:adv-examples-voc-exp}, the TOG-m attack causes SSD512 to misclassify the \verb|person| object as \verb|train| with high confidence, and causes a transfer attack to FasterRCNN by fabricating additional \verb|person| and \verb|bicycle| objects (see the second row and third column), while the attack fails to transfer to YOLOv3 object detection model and DeepLabv3-ResNet152 semantic segmentation model. As a result, both heterogeneous ensemble teams are attack-resilient. For the second example in the last two rows in Figure~\ref{fig:adv-examples-voc-exp}, the TOG-v attack against SSD512 object detector causes all three \verb|person| objects to vanish. This attack is partially transferred to YOLOv3 and hence the tier-1 ensemble, causing one \verb|person| object to vanish in the YOLOv3 detection and the tier-1 ensemble detection. However, the TOG-v attack fails to transfer to FasterRCNN and DeepLabv3-ResNet152. Hence, the two-tier heterogeneous ensemble remains to be attack-resilient.
{\it Third,} the two-tier heterogeneous ensemble can significantly improve the mAP under both benign (no attack) and TOG attack scenarios. It improves the benign mAP over the victim model by 13.11\%. Under both TOG-m and TOG-v attacks, this two-tier heterogeneous ensemble consistently achieves the highest mAP under different attack levels with different $\epsilon$ and outperforms the victim model by 1.75$\sim$100.98$\times$ and the other tier-1 ensemble (4 OD: \verb|0,2,3,4|) by 1.13\%$\sim$4.58\% in mAP. With the default $\epsilon$ = 8/255, this two-tier heterogeneous ensemble shows high robustness with the mAP of 71.75\% and 70.79\% respectively, which is on par with the benign mAP of 73.11\% for the victim object detector SSD512-VGG16.

\noindent \textbf{Different Object Detectors under Attacks.}
We found similar observations when different object detectors are the attack victims. Table~\ref{table:adv-faster-rcnn-yolov3-voc} shows when FasterRCNN (ResNet50) or YOLOv3 (DarkNet53) is the victim, our proposed heterogeneous ensemble is equally robust. This set of experiments further demonstrates that our heterogeneous ensemble approach can effectively protect different types of object detectors and boost ensemble robustness under both benign and adversarial settings.

\begin{table}[h!]
\centering
\caption{Our Heterogeneous Ensemble Approach vs. NMS}
\label{table:ode-mAP-evaluation-nms}
\small
\scalebox{0.91}{
\begin{tabular}{c|c|c|c|c|c}
\hline
\multicolumn{2}{c|}{Ensemble Team}                                                       & 0,2,3,5 & 0,2,3 & 2,3 & 0,2 \\ \hline
\multirow{2}{*}{\begin{tabular}[c]{@{}c@{}}Benign\\ mAP (\%)\end{tabular}}       & Ours & \textbf{86.22}                                                                         & \textbf{85.98}                                                             & \textbf{85.32}                                                                      & \textbf{84.22}       \\ \cline{2-6}
                                                                                 & NMS  & N/A                                                                           & 80.95                                                             & 80.17                                                                      & 79.54       \\ \hline
\multirow{2}{*}{\begin{tabular}[c]{@{}c@{}}Under attack\\ mAP (\%)\end{tabular}} & Ours & \textbf{71.75}                                                                         & \textbf{65.93}                                                             & \multirow{2}{*}{\begin{tabular}[c]{@{}c@{}}No victim\\ (SSD)\end{tabular}} & \textbf{40.00}       \\ \cline{2-4} \cline{6-6}
                                                                                 & NMS  & N/A                                                                           & 44.15                                                             &                                                                            & 30.46      \\ \hline
\multicolumn{6}{l}{\begin{tabular}[l]{@{}l@{}}
Model 2: 81.48\% mAP (highest member); N/A: NMS cannot support\\DeepLabv3, a semantic segmentation model in ensemble consensus.\end{tabular}}
\end{tabular}
}
\end{table}

\noindent {\bf Ablation Study.} Table~\ref{table:ode-mAP-evaluation-nms} shows the comparison of 4 ensembles consisting of 4 heterogeneous models 0, 2, 3, 5 on VOC. We highlight three interesting observations.
{\it First,} our approach consistently outperforms the baseline NMS approach for performing ensemble consensus under both benign and adversarial settings and significantly improves the mAP by 4.68\%$\sim$21.78\%.
{\it Second,} our approach provides the support for two-tier heterogeneous ensembles. The two-tier heterogeneous ensemble, \verb|0,2,3,5|, incorporates a semantic segmentation model (DeepLabv3, model \verb|5|) into this tier-1 ensemble of 3 object detectors (\verb|0,2,3|), significantly strengthens the ensemble robustness under adversarial attacks and improves the under attack mAP by 5.82\%.
{\it Third,} the ensemble \verb|0,2| only achieved the focal diversity scores of 0.519 (pairwise) and 0.457 (non-pairwise), which is much lower than the other two ensembles under attack (\verb|0,2,3| and \verb|0,2,3,5| in Table~\ref{table:div-mAP-voc}) with over 0.991 focal diversity scores. The lower ensemble diversity of this ensemble of SSD512 and YOLOv3 also explains its larger drop in mAP (84.22\%$\rightarrow$40\%) under attack in Table~\ref{table:ode-mAP-evaluation-nms}.
Therefore, the ensembles with a higher diversity score indeed provide higher mAP performance under both benign and attack scenarios.

\section{Conclusion}
We presented a novel DNN ensemble learning approach by exploring model learning heterogeneity and focal diversity of ensemble teams.
{\it First,} we show that combining heterogeneous DNN models trained on the same learning problem, e.g., object detection, with high focal diversity can improve the generalization performance in detection mAP in situations where there exists disagreement among individual member models.
{\it Second,} we introduce a connected component based alignment method to further boost model learning heterogeneity by incorporating models trained for solving different learning problems, such as two-tier heterogeneous ensembles of semantic segmentation models with object detection models.
{\it Third,} we present a formal analysis of heterogeneous ensemble robustness in terms of negative correlation and comprehensive experimental evaluation. We show that combining focal diversity and model learning heterogeneity can effectively boost ensemble robustness, especially when individual member models fail on negative examples or are vulnerable to adversarial attacks.


\section*{Acknowledgment}
This research is partially sponsored by the NSF CISE grants 2302720, 2312758, 2038029, an IBM faculty award, and a grant from CISCO Edge AI program.




\bibliographystyle{IEEEtran}
\bibliography{IEEEabrv,reference}
%



\appendix

Although the theoretical analysis has shown us that diverse member models with low correlation can improve the ensemble robustness against adversarial attacks, it does not offer practical guidelines for measuring and enforcing the diversity of an ensemble team.
Existing ensemble diversity metrics~\cite{EnsembleBenchCogMI,EnsembleBenchCVPR,EnsembleBenchICDM,diversityaccuracy,generalizeddiversity,binarydisagreement} are proposed to measure the ensemble diversity for single task learners, such as image classifiers. For multi-task learners, such as object detectors, it is challenging to measure the ensemble diversity due to their complex outputs, such as the bounding boxes, class labels, and classification confidence scores for object detection.
We propose a novel method to evaluate ensemble diversity for object detector ensembles, coined as focal diversity, including pairwise and non-pairwise diversity metrics.

Formally, an input image $\x$ may contain multiple ground-truth objects. Instead of treating each image $\x$ as an instance, we treat each ground-truth object $\boldg$ as an instance and count the corresponding number of correctly detected objects and wrong (or missed) objects (negative instances) in ensemble diversity calculation.
We first define pairwise and non-pairwise focal negative correlations. Given an ensemble team $T$ of size $N$ and a focal model $M_i^f \in T$, the focal negative correlations are computed on the negative instances by the focal model $M_i^f$ from the validation set. The \textbf{pairwise focal negative correlation} is defined as
\begin{equation}
\small
\begin{aligned}
\lambda_{p,q}(M_p,M_q, M_i^f) &= \frac{Z^{10}+Z^{01}}{Z^{11}+Z^{10}+Z^{01}+Z^{00}} \\
\lambda_{focal}^{pair} (T, N, M_i^f) &= \frac{2}{N(N-1)} \sum_{\mathclap{M_p, M_q \in T, p < q \rule{0pt}{1.5ex}}} \lambda_{p,q}(M_p, M_q, M_i^f)
\end{aligned}
\end{equation}
where $\lambda_{p,q}(M_p,M_q,M_i^f)$ measures the level of disagreement of a pair of member models, $M_p$ and $M_q$, using the ratio of the number of instances on which one detector is correct while the other is wrong ($Z^{10}$ and $Z^{01}$) over the total number of instances~\cite{binarydisagreement}. $Z^{11}$ indicates the number of instances on which both detectors are correct, and $Z^{00}$ denotes the number of instances on which both detectors are wrong. $\lambda_{focal}^{pair} (T, N, M_i^f)$ is the pairwise focal negative correlation score.
Alternatively, the \textbf{non-pairwise focal negative correlation}, $\lambda_{focal}^{nonpair} (T, N, M_i^f)$, can be directly calculated on an ensemble of over two member models. Let $Y$ denote a random variable representing the fraction of $N$ models that made detection errors on a random ground truth object $\boldg$. $p_i$ denotes the probability of $Y=i/N$, that is the probability of $i$ out of $N$ detectors failing on a random object $\boldg$. Hence, we can compute the non-pairwise focal negative correlation as
\begin{equation}
\small
\lambda_{focal}^{nonpair} (T, N, M_i^f) = 1 - p(2)/p(1)
\end{equation}
where $p(1) = \sum_{i=1}^{N}(i/N)p_i$ and $p(2)=\sum_{i=1}^{N}(i/N)((i-1)/(N-1))p_i$. The maximum diversity of 1 is reached when one detector fails and the other detector is correct for randomly picked two detectors, that is $p(2) = 0$. The minimum diversity of 0 corresponds to $p(1)=p(2)$, where the probability for two randomly picked detectors failing is the same as the probability of one detector failing.

An ensemble team $T$ of size $N$ has $N$ focal negative correlation scores, that is one $\lambda_{focal}(T, N, M_i^f)$ for each focal (member) model $M_i^f \in T$. The ensemble team has high focal ensemble diversity if all $N$ focal negative correlations are high. Therefore, we can measure the \textbf{focal diversity}, denoted as $d_{focal}$, by using weighted average of the $N$ normalized focal negative correlation scores as
\begin{equation}
\small
d_{focal}(T, N) = \frac{1}{N} \sum_{M_i^f \in T} w_i \hat{\lambda}_{focal}(T, N, M_i^f)
\end{equation}
where $w_i$ is the weight and $\hat{\lambda}_{focal}(T, N, M_i^f)$ is the normalized focal negative correlation scores as suggested by~\cite{EnsembleBenchCVPR}.
The focal diversity metrics can be leveraged for identifying high quality ensembles of object detectors with low correlations among member models, which can effectively improve the overall ensemble robustness.

\end{document}